\definecolor{Gray}{gray}{0.9}
\definecolor{LightCyan}{rgb}{0.88,1,1}
\newcommand{\muc}[2]{\multicolumn{#1}{c}{#2}}
\newtheorem{theorem}{Theorem}
\newtheorem{lemma}[theorem]{Lemma} 
\newtheorem{proposition}[theorem]{Proposition} 
\newtheorem{remark}[theorem]{Remark}
\newtheorem{definition}[theorem]{Definition}
\definecolor{colorone}{rgb}{0.00,0.45,0.74}
\definecolor{colortwo}{rgb}{0.85,0.33,0.10}
\definecolor{colorthree}{rgb}{0.49,0.18,0.56}
\definecolor{colorfour}{rgb}{0.93,0.69,0.13}
\definecolor{colorfive}{rgb}{0.47,0.67,0.19}
\definecolor{colorsix}{rgb}{0.30,0.75,0.93}
\newcommand{\statee}{s}
\newcommand{\reals}{\mathbb{R}}
\newcommand{\gaussian}{\mathcal{N}}
\newcommand{\horizon}{\mathrm{K}}
\newcommand{\timeid}{k}
\newcommand{\init}{\mathcal{I}}
\newcommand{\traj}{\sigma}
\newcommand{\trajsim}{\traj^{\mathsf{sim}}}
\newcommand{\ressim}{\rho}
\newcommand{\resreal}{\rho}
\newcommand{\errsim}[1]{R^{#1}}
\newcommand{\errreal}[1]{R^{#1}}
\newcommand{\trajsimseg}[1]{\traj^{\mathsf{sim} , #1}}
\newcommand{\PE}{\mathsf{PE}}
\newcommand{\PEreal}{\mathsf{PE}}
\newcommand{\PEsimseg}[1]{\mathsf{PE}^{#1}}
\newcommand{\eigvecseg}[1]{\mathsf{V}^{ #1}}
\newcommand{\trajreal}{\traj^{\mathsf{real}}}
\newcommand{\relu}{\mathsf{ReLU}}
\newcommand{\overallf}{\mathcal{F}}
\newcommand{\transpose}[1]{{#1}^{\top}}
\newcommand{\dist}{\trajdist^{\mathsf{real}}}
\newcommand{\distzero}{\trajdist^{\mathsf{sim}}}
\newcommand{\distR}{\trajdistR^{\mathsf{real}}}
\newcommand{\distzeroR}{\trajdistR^{\mathsf{sim}}}
\newcommand{\trajdataset}{\mathcal{T}}
\newcommand{\traindataset}{\trajdataset^{\mathsf{trn}}}
\newcommand{\tv}{\mathsf{TV}}
\newcommand{\calibdataset}{\mathcal{R}^{\mathsf{calib}}}
\newcommand{\mypara}[1]{\vspace{0.3em} \noindent{\bf #1}.}
\def\relu{\mathrm{ReLU}}
\newcommand{\navid}[1]{\textcolor{black}{#1}}
\newcommand{\navidd}[1]{\textcolor{black}{#1}}
\newcommand{\navidg}[1]{\textcolor{black}{#1}}
\newcommand{\distinit}{\mathcal{W}}
\newcommand{\states}{\mathcal{S}}
\newcommand{\Statee}{S}
\newcommand{\trajdist}{\mathcal{D}_{\Statee,\horizon}}
\newcommand{\trajdistR}{\mathcal{J}_{\Statee,\horizon}}
\begin{document}

\title{PCA-DDReach: Efficient Statistical Reachability Analysis of Stochastic Dynamical Systems via Principal Component Analysis}
\author{\name Navid Hashemi     \email navidhas@usc.edu \\
       \addr University of Southern California, Los Angeles, California, United States\\
       \AND
       \name Lars Lindemann        \email llindema@usc.edu \\
       \addr University of Southern California, Los Angeles, California, United States\\
       \AND
       \name Jyotirmoy Deshmukh \email jdeshmuk@usc.edu\\
       \addr University of Southern California, Los Angeles, California, United States\\
       }

\maketitle

\begin{abstract}
This study presents a scalable data-driven algorithm designed to efficiently address the challenging problem of reachability analysis. Analysis of cyber-physical systems (CPS) relies typically on parametric physical models of dynamical systems. However, identifying parametric physical models for complex CPS is challenging due to their complexity, uncertainty, and variability, often rendering them as black-box oracles. As an alternative, one can treat these complex systems as black-box models and use trajectory data sampled from the system (e.g., from high-fidelity simulators or the real system) along with machine learning techniques to learn models that approximate the underlying dynamics. However, these machine learning models can be inaccurate, highlighting the need for statistical tools to quantify errors. Recent advancements in the field include the incorporation of statistical uncertainty quantification tools such as conformal inference (CI) that can provide probabilistic reachable sets with provable guarantees. Recent work has even highlighted the ability of these tools to address the case where the distribution of trajectories sampled during training time are different from the distribution of trajectories encountered during deployment time.  However, accounting for such distribution shifts typically results in more conservative guarantees. This is undesirable in practice and motivates us to present techniques that can reduce conservatism. Here, we propose a new approach that reduces conservatism and improves scalability by combining conformal inference with Principal Component Analysis (PCA).  We show the effectiveness of our technique on various case studies, including a 12-dimensional quadcopter and a 27-dimensional hybrid system known as the powertrain.
\end{abstract}

\begin{keywords}
  \navid{\navidd{Reachable set estimation, Conformal Inference, Principal Component Analysis}}
\end{keywords}

\section{Introduction}
\label{sec:intro}
\navid{System} verification tools are crucial for ensuring correctness prior to testing, implementation, or deployment, particularly in expensive, high-risk, or safety-critical systems \cite{zhang2023reachability,schilling2022verification,komendera2012intelligent}. In real-world implementations, we face two significant challenges for \navid{system} verification: (1) assuming access to an underlying model for the system is \navid{often prohibitive}, and (2) the presence of noise and uncertainty results in \navid{stochastic systems}, requiring us to perform \navidd{statistical} verification \navidd{to obtain} probabilistic \navidd{safety certificates}. Data-driven statistical \navidd{reachability analysis} is a well-established tool for \navidd{statistical} verification, \navidd{and has been applied} to safety-critical problems across various domains, \navidd{including, autonomy, medical imaging and CPS. } \cite{devonport2021data,fisac2018general,dryvr,hashemi2024statistical}. 

In the context of data-driven statistical reachability analysis, given a user-specified threshold $\delta \in (0,1)$, the goal is to produce a set that \navidd{ensures that} any trajectory during deployment lies within \navidd{this set} with a probability of \navidd{no less than} $\delta$. \navidd{As explained in the "\textit{Related Work}" section, our work is motivated by the method proposed in \cite{hashemi2024statistical}. This methodology involves three }main steps: (1) learning a deterministic surrogate model from sampled trajectories, (2) performing reachability analysis on the surrogate model, and (3) using robust conformal inference  \cite{cauchois2020robust} to calculate an inflating hypercube. This inflating hypercube quantifies the necessary expansion of the surrogate model's reachable set to provide provable probabilistic guarantees on the flowpipe that is obtained for the black-box oracle. 

Nonetheless, the utilized technique in \cite{hashemi2024statistical} to integrate robust conformal inference, \navid{can be adjusted to reduce the level of conservatism}. Our first contribution in this paper is to address this issue, by combining robust conformal inference with Principal Component Analysis (PCA), and we show this adjustment results in tighter inflating hypercubes and probabilistic reachable sets. \navid{
In \cite{hashemi2024statistical}, the authors assume a single model that maps the initial state to the entire trajectory\footnote{This approach is motivated by the fact that training a one-step model and iterating it over time leads to the well-known issue of cumulative errors}. However, this results in a large model, which limits the scalability of reachability analysis for extended horizons. Training such a large model becomes inefficient. Furthermore, the model’s large size makes accurate methods for surrogate reachability infeasible, necessitating the use of conservative over approximation techniques.}
 In response, our second contribution here is to address these challenges by presenting a new training strategy that effectively resolves for all of these listed scalability issues. \navid{We propose training a set of independent small models, each mapping the initial state to a separate sub-partition of the trajectory. This approach eliminates the need for iterating a model over time while maintaining smaller models that are more suitable for efficient training and efficient surrogate reachability analysis.}

\mypara{Related Work}  \navidd{In the context of data-driven reachability analysis, typically, to obtain the reachable set, a dataset of system trajectories needs to be available, e.g., from a black box simulator.} In \cite{devonport2021data}, the authors use Christoffel functions\footnote{See \cite{lasserre2019empirical,marx2021semi} for more details about the Christoffel functions.} to \navidd{learn the model of the system from such a } dataset and generate probabilistic \navidd{reachable} sets. This methodology has been extended in \cite{tebjou2023data} by incorporating conformal inference \cite{vovk2012conditional} \navidd{to improve its data efficiency}. In \cite{devonport2020data}, a Gaussian process-based classifier is employed to learn a model \navidd{that distinguishes} between reachable and unreachable states to approximate the \navidd{reachable} set. \navid{In \cite{devonport2020estimating,dietrich2024nonconvex} \navidd{scenario optimization is used} to generate probabilistic reachable sets.} The method in \cite{fisac2018general} assumes partial knowledge of the \navidd{system} and leverages the dataset to perform statistical reachability analysis. Similarly, the work presented in \cite{dryvr} uses the dataset to \navidd{learn an exponential discrepancy function that estimates trajectory's sensitivity to uncertainty, enabling the computation of probabilistic reachable sets}. Of our particular interest is the method proposed by \cite{ hashemi2024statistical}, which suggests learning a \navidd{system} model on this dataset via $\text{ReLU}$ neural networks \navidd{and then conduct statistical reachability analysis via conformal inference.}  \navidd{By leveraging the ability of neural networks to model complex, high-dimensional relationships alongside the data efficiency of conformal inference, this approach establishes an efficient and structured framework for statistical reachability analysis.} 
Additionally, the probabilistic guarantees proposed by \cite{hashemi2024statistical} remain valid even when there is a distribution shift between the training and deployment environments. \navidd{This is the main reason we focus on extending this work instead of building on other existing methodologies.}

\navidd{Conformal inference (CI) is a data-efficient method for formally providing guarantees on the $\delta$-quantile of distributions. This method involves sampling an i.i.d. scalar dataset, sorting the samples in ascending order, and demonstrating that one of the sorted samples represents the $\delta$-quantile.} The integration of CI with formal verification techniques has recently received noticeable interest, that is primarily due to its accuracy and level of scalability. For instance, 
\cite{bortolussi2019neural} merges CI with neural state classifiers to develop a stochastic runtime verification algorithm. \cite{lindemann2023safe,zecchin2024forking} employ CI to guarantee safety in MPC control using a trained model. \cite{tonkens2023scalable}  applies CI for planning with probabilistic safety guarantees, and \cite{hashemi2024statistical} integrates CI with existing neural network reachability techniques and provides a scalable reachability analysis on stochastic systems, see \cite{lindemann2024formal} for a recent survey article.

\mypara{Notation} We use bold letters to represent vectors and vector-valued functions, while \navid{caligraphic} letters  denote sets and distributions. The set $\left\{1,2,\ldots, n \right\}$ is denoted as $[n]$. The Minkowski sum is indicated by $\oplus$. We use $x \sim \mathcal{X}$ to denote that the random variable $x$ is drawn from the distribution $\mathcal{X}$. We present the structure of a feedforward neural network (FFNN) with $\ell$  hidden layers as  an array $[n_0,n_1,\ldots n_{\ell+1}]$, where $n_0$ denotes the number of inputs, $n_{\ell+1}$  is the number of outputs, and $n_i , i\in [\ell]$ denotes the width of the $i$-th hidden layer. We denote $e_i\in\mathbb{R}^n$ as the $i$-th base vector of $\mathbb{R}^n$. We also denote $\lceil x \rceil$ as the smallest integer greater than $x \in \mathbb{R}$.

\section{Preliminaries}
\label{sec:prelim}
\subsection{Stochastic Dynamical Systems} 
\vspace{-2mm}
Consider a set of random vectors $\Statee_0, \ldots, \Statee_\horizon \in \states$ indexed at times $0, \ldots, \horizon$ and with state space $\states\subseteq\mathbb{R}^n$. A realization of this stochastic process is a sequence of values $\statee_1, \ldots, \statee_\horizon$, denoted as system trajectory  $\trajreal_{\statee_0}$. The joint distribution over $\Statee_1, \ldots, \Statee_\horizon$ is the trajectory distribution $\dist$, while the marginal distribution of $\Statee_0$ is known as the initial state distribution $\distinit$. It is assumed that $\distinit$ has support over a compact set of initial states $\init$, implying $\Pr[\statee_0 \notin \init] = 0$.

\mypara{Training and Deployment Environments} In the training environment, we pre-record or simulate datasets to conduct reachability analysis. Conversely, the deployment environment refers to the real world where we apply our reachable sets. There is typically a difference between the distribution of trajectories in the training and deployment environments. We refer to this difference as distribution shift. In this paper, we assume that for a predefined distribution on initial states $\statee_0 \sim \mathcal{W}$, the real-world trajectories $\trajreal_{\statee_0}$ are sampled from $\trajreal_{\statee_0} \sim \dist$, whereas the simulated trajectories $\trajsim_{\statee_0}$ are sampled from $\trajsim_{\statee_0} \sim \distzero$. 

\vspace{-2mm}
\subsection{Surrogate Model: Reachability \& Error Analysis}
 \vspace{-2mm}
 A surrogate model $\overallf: \init \times \Theta \to \states^\horizon$, with trainable parameters $\theta \in \Theta$, can be trained by sampling $\horizon$-step trajectories $\trajsim_{\statee_0} \sim \distzero$ from the simulator to predict the trajectory $\trajsim_{\statee_{0}} \in \states^\horizon$ given its initial state $\statee_0 \in \init$.  We call this dataset $\traindataset$, and we denote the predicted trajectory by,
\begin{equation}\label{eq:surrogate}
\bar{\traj}_{\statee_{0}} = \overallf(\statee_{0}\ ; \theta),\ \text{where},\ \  \overallf(\statee_{0}\ ; \theta) = 
\left[\mathsf{F}^1(\statee_0), \ldots, \mathsf{F}^n(\statee_0),  \ldots, 
\mathsf{F}^{(\horizon-1)n+1}(\statee_0), \ldots, \mathsf{F}^{n\horizon}(\statee_0)\right]^\top 
\end{equation}
\navid{where}, $\mathsf{F}^{(\timeid-1)n+\ell}(\statee_0)$ is the $\ell^{th}$ state component at the $\timeid^{th}$ time-step in the trajectory\footnote{Here, the dimension and time steps are stacked into a single vector.}. Let $e_\ell\in\reals^n$ denote the $\ell$-th basis vector of $\reals^n$. For a trajectory $\statee_1, \ldots \statee_\horizon$, and $\statee_0 \sim \mathcal{W}$, for $j =(\timeid-1)n+\ell$, we define the prediction errors as,
\begin{equation}
\label{eq:compwise_residual}
R^j =
e_{\ell}^\top \statee_\timeid - \mathsf{F}^j(\statee_0), \quad \timeid \in[\horizon], \ell\in[n].
\end{equation}

In this paper, we \navidd{introduce the residual $\rho:\mathbb{R}^{n\horizon} \to \mathbb{R}_{\ge 0}$ as a function} of the prediction errors $R^j$, where $j \in [n\horizon]$. In this section, we present a previously proposed example of such a \navid{function and later introduce an adjustment to reduce the conservatism in probabilistic reachability}.
\begin{definition}[Simulation \& Real Residual Distribution]
If the residual is generated by $\trajsim_{\statee_0}\sim \distzero$, we denote the residual distribution as $\ressim \sim \distzeroR$ where $\distzeroR$ is the simulation residual distribution. Conversely, if the residual is generated by $\trajreal_{\statee_0}\sim \dist$, we denote it as $\resreal \sim \distR$ where $\distR$ is the real residual distribution.
\end{definition}
We also utilize total variation, \cite{takezawa2005introduction} as a metric to quantify their distribution shift, $\tau\geq 0$. In other word, $\tau = \tv(\distzeroR, \distR)$, where $\tv$ refers to the total variation.

\mypara{Surrogate Flowpipe and Star-Set}  The surrogate flowpipe $\bar{X}\subset \reals^{n\horizon}$ is defined as a superset of the image of $\overallf(\init\ ;\theta)$. Formally, for all  $\statee_0\in \init$, we need that $\overallf(\statee_0\ ;\theta) \in \bar{X}$. Due to the recent achievements in verifying neural networks with ReLU activations, we limit ourselves to the choice of $\relu$ neural networks as our surrogate models, and we rely on the NNV toolbox from \cite{tran2020nnv} to compute the surrogate flowpipe.  Although other activation functions can be used, we anticipate more conservative results if non-ReLU activation functions are utilized. \navidg{We choose to use NNV in our analysis here, because it can yield accurate reachability results in settings where neural networks with $\relu$ activation function are used. } The approach in \cite{tran2020nnv} employs star-sets (an extension of zonotopes) to represent the reachable set and utilizes two main methods: (1) the exact-star method, which performs precise but slow computations, and (2) the approx-star method, which is faster but it is more conservative.
\begin{definition}[Star set \cite{bak2017simulation}]\label{def:star} A \textit{star set} $Y\subset \mathbb{R}^d$ is a tuple $\langle c, V, P \rangle$ where $c \in \mathbb{R}^d$ is the center, $V = \{v_1, v_2, \ldots, v_m \}$ is a set of $m$ vectors in $\mathbb{R}^d$ called \textit{basis vectors}, and $P : \mathbb{R}^m \rightarrow \{\top, \bot\}$ is a predicate. The basis vectors are arranged to form the star's $d \times m$ basis matrix. \navid{Given variables $\mu_\ell \in \mathbb{R}, \ell = 1,\ldots,m$}, the set of states represented by the star is given as:
\begin{equation}
 Y  = \left\{ y \mid y = c + \sum_{\ell=1}^{m} (\mu_\ell v_\ell) \text{ s.t. } P(\mu_1, \ldots, \mu_m) = \top \right\}.     
\end{equation}
\end{definition}


\vspace{-2mm}
\subsection{Conformal Inference \& Probabilistic Reachability }
\vspace{-2mm}
\navid{A key step toward probabilistic reachability is to provide a provable $\delta$-quantile for the residual}. Let $\ressim_1 < \ressim_2 < \ldots < \ressim_L$ represent $L$ different i.i.d. residuals sampled from $\distzeroR$, and sorted \navid{in ascending order}. Given a confidence probability,  $\delta \in (0,1) $, a provable tight upper bound for the $\delta$-quantile of the residuals $\resreal \sim \distR$ is computable from samples $\ressim_i\sim \distzeroR , i\in[L]$, using robust conformal inference proposed in \cite{cauchois2020robust} that is an extension of CI, proposed in \cite{vovk2012conditional}.

\navid{The theory of conformal inference  states that for a new sample \(\ressim \sim \distzeroR\), the rank \(\ell := \lceil (L+1)\delta\rceil \le L\) satisfies $\Pr[ \ressim < \ressim_\ell] \geq \delta.$ This implies that \(\ressim_\ell\) serves as a provable upper bound for the \(\delta\)-quantile of \(\distzeroR\). However, this result does not extend to another residual \(\resreal \sim \distR\), which is drawn from a different distribution. To address this distribution shift, the theory of robust conformal inference introduces an adjustment to conformal inference. It establishes that for any random variable \(\resreal \sim \distR\) satisfying \(\tv(\distR, \distzeroR) \leq \tau\) with a threshold \(\tau > 0\), we have $\Pr[\resreal < \ressim_{\ell^*}] > \delta$, where  
\begin{equation}\label{eq:robustrank}
\ell^*:=\lceil (L+1)(1+1/L)(\delta+\tau)\rceil,\ \ \ell^* \leq L.
\end{equation}  
Thus, \(\ressim_{\ell^*}\) serves as an upper bound for the \(\delta\)-quantile of \(\distR\).}


\mypara{Inflating Hypercube} In reachability analysis, the main purpose for the definition of the residual is to achieve a \navid{bounding region} that will cover the random sequence of prediction errors, $\PE = [R^1, R^2, \ldots, R^{n\horizon}]$ with a  confidence $\delta\in (0,1)$. In this case, as suggested by \cite{cleaveland2023conformal}, the $\max()$ operator over the absolute value of all errors is a suitable choice. The authors in \cite{hashemi2024statistical}, for some positive constants $\alpha_j, j\in[n\horizon]$ (details on the choice of $\alpha_j$ can  be found in \cite{hashemi2024statistical}), define   the residual
\begin{equation}\label{eq:Rmax}
\rho:= R = \max( \alpha_1 |R^1|, \alpha_2 |R^2|, \ldots, \alpha_{n\horizon}|R^{n\horizon}|),
\end{equation}
and \navidd{show} that such a bounding region is achievable by computing an upper bound for the $\delta$-quantile of residual, $R$. In other words, assuming $R^*$ as the mentioned upper bound, we have,
\begin{equation}\label{eq:Pstar}
\Pr[R<R^*] \geq \delta \!\! \navid{\iff} \!\!\Pr[P^*(R^1,\ldots,R^{n\horizon})\! = \!\top] \geq \delta,\ \   P^*(R^1,\ldots,R^{n\horizon})\!\! = \!\!\! \bigwedge_{j=1}^{n\horizon}\! (|R^j| \! < \! \frac{R^*}{\alpha_j})
\end{equation}
where $R^*$ is efficiently obtainable via robust conformal inference. \navidd{As defined in \eqref{eq:Pstar}, the predicate $P^*$ implies} that for every component $R^j, j=1,\ldots,n\horizon$ of the vector $\PE$ we have $-R^*/\alpha_j\leq R^j \leq R^*/\alpha_j$. This describes a hypercube which can be formulated as the following star set:
\begin{equation}
\delta X = \langle\  0_{n\horizon \times 1},\  I_{n\horizon},\  P^*(R^1,\ldots,R^{n\horizon}) \ \rangle \subset \mathbb{R}^{n\horizon}.
\end{equation}
Since $\Pr[P^* = \top] \geq \delta$, this star set serves as such a bounding region for $\PE$. We refer to this bounding region as inflating hypercube.

\mypara{$\delta$-Confident Flowpipe \& Probabilistic Reachability} For a given confidence probability $\delta\in (0,\ 1)$, and $\statee_0 \sim \mathcal{W}$, we say that $X \subseteq \reals^{n\horizon}$ is a $\delta$-confident flowpipe if for any random trajectory $\trajreal_{\statee_0} \sim \dist$, we have $\Pr[\trajreal_{\statee_0} \in X ] \geq \delta$. \navidd{In this paper, our ultimate goal is to propose a $\delta$-confident flowpipe. To compute such a flowpipe, the authors in \cite{hashemi2024statistical}} suggest simulating a set of trajectories, $\trajsim_{\statee_0} \sim \distzero, \statee_0 \sim \mathcal{W}$ and training a $\relu$ NN surrogate model $\overallf(\statee_0 ; \theta)$ on this dataset. This model will be utilized to compute for its surrogate reachset, $\bar{X}\subset \mathbb{R}^{n\horizon}$. They also sample a new set of trajectories $\trajsim_{\statee_0} \sim \distzero, \statee_0 \sim \mathcal{W}$ for error analysis on $\overallf(\statee_0 ; \theta)$ through robust conformal inference to compute for another hypercube $\delta X \subset \mathbb{R}^{n\horizon}$, known as the inflating hypercube, that covers the prediction errors $R^j , j\in[n\horizon]$ for trajectories $\trajreal_{\statee_0} \sim \dist$, with a provable probabilistic guarantee, and finally they propose the following lemma to compute for the $\delta$-confident flowpipe on $\trajreal_{\statee_0} \sim \dist, \statee_0 \sim \mathcal{W}$. See \cite{hashemi2024statistical} for the proof.
\begin{lemma}\label{lem:inclusion_conformal}
Let $\bar{X}$ be a surrogate flowpipe of the surrogate model $\overallf$ for the set of initial conditions $\init$. Let $\PEreal:=\left[ \errreal{1} , \errreal{2}, \ldots , \errreal{n\horizon} \right] $ be the sequence of prediction errors for $\trajreal_{\statee_0} \sim \dist$, where $\statee_0 \sim \mathcal{W}$, and  let $\delta X$ be the inflating hypercube for $\PEreal$ such that $\Pr[\PEreal \in \delta X] > \delta$. Then the inflated reachset $X = \bar{X} \oplus \delta X $ is a $\delta$-confident flowpipe for $\trajreal_{\statee_0} \sim \dist$ where $\statee_0 \sim \mathcal{W}$.
\end{lemma}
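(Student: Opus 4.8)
The plan is to unpack the three set-theoretic objects in play — the surrogate flowpipe $\bar{X}$, the inflating hypercube $\delta X$, and their Minkowski sum $X = \bar{X}\oplus\delta X$ — and show that membership $\trajreal_{\statee_0}\in X$ is implied by the single event $\PEreal\in\delta X$, whose probability is already known to exceed $\delta$ by hypothesis. The key decomposition is the identity $\trajreal_{\statee_0} = \overallf(\statee_0\ ;\theta) + \PEreal$, which follows directly from the definition of the component-wise prediction errors in \eqref{eq:compwise_residual}: stacking $R^j = e_\ell^\top \statee_\timeid - \mathsf{F}^j(\statee_0)$ over all $j=(\timeid-1)n+\ell$ gives exactly $\statee_1,\ldots,\statee_\horizon$ minus the stacked surrogate prediction. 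So the true (stacked) trajectory is the surrogate prediction plus the error vector.

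First I would fix an arbitrary realization $\statee_0\sim\mathcal{W}$ and the corresponding $\trajreal_{\statee_0}\sim\dist$, and condition on the event $\{\PEreal\in\delta X\}$. Second, since $\statee_0\in\init$ almost surely (because $\distinit=\mathcal{W}$ is supported on $\init$, so $\Pr[\statee_0\notin\init]=0$), the defining property of the surrogate flowpipe gives $\overallf(\statee_0\ ;\theta)\in\bar{X}$. Third, on the conditioning event we also have $\PEreal\in\delta X$. Fourth, invoke the definition of Minkowski sum: $\bar{X}\oplus\delta X = \{\,x + y : x\in\bar{X},\ y\in\delta X\,\}$, so $\overallf(\statee_0\ ;\theta) + \PEreal \in X$. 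Combining with the decomposition identity yields $\trajreal_{\statee_0}\in X$. Hence the event $\{\PEreal\in\delta X\}$ is contained in the event $\{\trajreal_{\statee_0}\in X\}$, and therefore $\Pr[\trajreal_{\statee_0}\in X] \ge \Pr[\PEreal\in\delta X] > \delta$, which is exactly the definition of a $\delta$-confident flowpipe.

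The argument is essentially a containment-of-events bookkeeping exercise, so there is no serious obstacle; the only point that needs a little care is making sure the "stacking" conventions line up — that the ordering of components in $\overallf(\statee_0\ ;\theta)$ from \eqref{eq:surrogate}, in $\PEreal$, and in the star-set representation of $\delta X$ are all the same $j=(\timeid-1)n+\ell$ indexing — so that the additive identity $\trajreal_{\statee_0}=\overallf(\statee_0\ ;\theta)+\PEreal$ is literally a vector identity in $\reals^{n\horizon}$ rather than merely a componentwise statement up to permutation. One should also note that the almost-sure membership $\statee_0\in\init$ is what lets us drop a null set and keep the inequality $\Pr[\trajreal_{\statee_0}\in X]\ge\Pr[\PEreal\in\delta X]$ exact. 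Everything else is immediate from the definitions of $\bar{X}$, $\delta X$, and $\oplus$.
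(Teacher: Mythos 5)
Your argument is correct and is the standard (indeed essentially the only) proof of this inclusion: decompose $\trajreal_{\statee_0}=\overallf(\statee_0\ ;\theta)+\PEreal$, use $\Pr[\statee_0\notin\init]=0$ to place the surrogate prediction in $\bar{X}$ almost surely, and conclude $\{\PEreal\in\delta X\}\subseteq\{\trajreal_{\statee_0}\in X\}$ so that $\Pr[\trajreal_{\statee_0}\in X]\ge\Pr[\PEreal\in\delta X]>\delta$. The paper itself defers the proof to the cited prior work, but that reference's argument proceeds along the same lines, so there is nothing to flag beyond the careful bookkeeping you already note.
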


\vspace{-2mm}
\subsection{Problem Definition}
\vspace{-2mm}
We are interested in computing  a $\delta$-confident flowpipe $X$ from a set of trajectories $\trajsim_{\statee_0}$ collected from $ \distzero$ so that $X$ is also valid  for all trajectories $\trajreal_{\statee_0} \sim \dist$ when the total variation between $\distR$ and $\distzeroR$ is less than $\tau>0$. 
\navidd{While we are motivated by the results in \cite{hashemi2024statistical} which propose a  solution to the stated problem, we note that their solution lacks scalability and accuracy that results in sometimes large levels of conservatism, i.e., the set $X$ is unnecessarily large.}

\navid{The primary sources of conservatism and inaccuracy in the methodology described in \cite{hashemi2024statistical} stem from the training process for the surrogate model $\overallf(\statee_0\ ; \theta)$ and the method used to compute the inflating hypercube $\delta X$. In the following sections, we address both issues and propose solutions to improve the accuracy and scalability of this approach for the reachability analysis.}


\section{Scalable and Accurate Data Driven Reachability Analysis}
\label{sec:DDReach}

In this section, we introduce two key adjustments to the methodology of \cite{hashemi2024statistical} to enhance scalability and reduce conservatism.

\vspace{-2mm}
\subsection{Improved Scalabilty and Accuracy for Training  Surrogate Models}
\label{sec:Training}
\vspace{-2mm}
\begin{figure*}
    \centering
    \includegraphics[width = 0.6\textwidth]{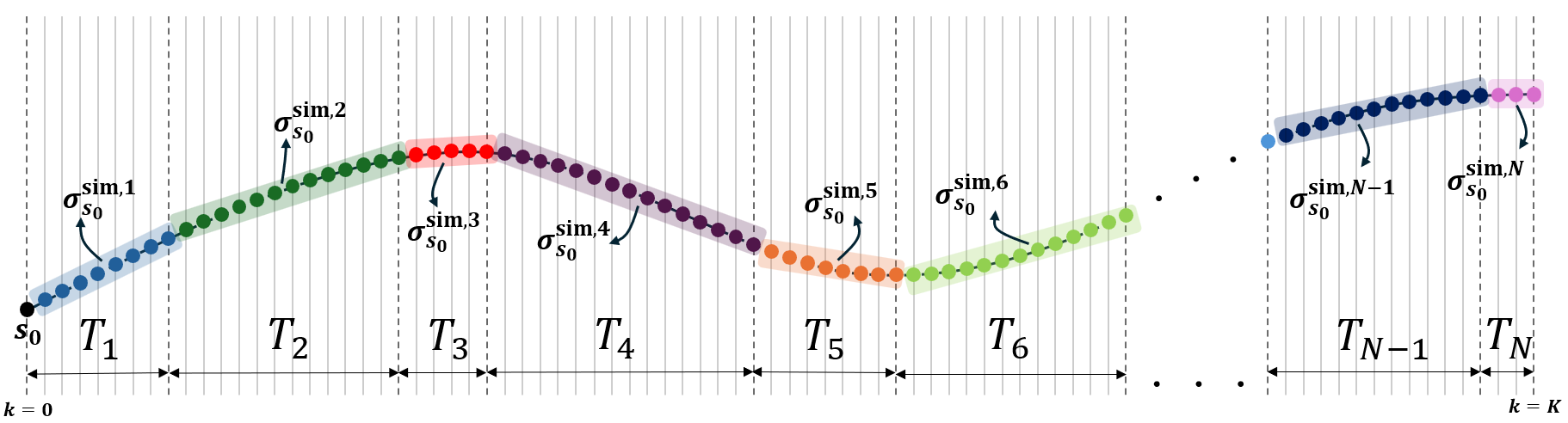}
    \caption{This figure shows the division of the trajectory into $N$ different segments $\trajsimseg{q}_{\statee_0}, q\in[N]$ }\label{fig:realization}
    \vspace{-5mm}
\end{figure*}

In this section, we introduce a new training strategy for the model $\overallf(\statee_0\ ;\theta)$ that avoids the scalability issues \navid{arising from the surrogate model's large size when handling long time horizons, as encountered in \cite{hashemi2024statistical}}. Figure \ref{fig:realization} illustrates a realization of a trajectory $\trajsim_{\statee_0} :=  \statee_1, \ldots, \statee_{\horizon}$ over the horizon $\horizon$. In this figure, we divide the time horizon into $N$ segments, each with length $T_q$, where $q \in [N]$. We denote each trajectory segment as $\trajsimseg{q}_{\statee_0},\ q \in [N]$, defined as:
\vspace{-3mm}
\begin{equation}
    \trajsimseg{q}_{\statee_0} := \statee_{t_q+1}, \statee_{t_q+2}, \ldots, \statee_{t_q+T_i}, \quad t_q = \sum_{\ell=1}^{q-1} T_\ell,\ t_1 = 0.
\end{equation}
\vspace{-3mm}\\
The key idea is to directly link each trajectory segment $\trajsimseg{q}_{\statee_0},\ q \in [N]$ to its initial state $\statee_0 \in \init$. Thus, we can train an independent model $\overallf_q(\statee_0\ ; \theta_q),\ q \in [N]$ for each segment, which predicts $\trajsimseg{q}_{\statee_0}$ directly based on the initial state $\statee_0$. This model is also used to compute surrogate flowpipes for the trajectory segments $\bar{X}_q, q \in [N]$, representing the image of set $\init$ through the model $\overallf_q(\init\ ; \theta_q)$.

Here are the reasons why this new training strategy for the trajectory $\trajsim_{\statee_0}$ resolves all the scalability issues, we listed for \cite{hashemi2024statistical}, in the Introduction section. \navid{First and foremost,} since all surrogate models $\overallf_q(\statee_0\ ; \theta_q), q \in [N]$ are directly connected to the initial state, we do not need \navid{to iterate} them \navid{sequentially} over the time horizon for prediction of states in  $\trajsimseg{q}_{\statee_0}, q\in[N]$, thus eliminating the problem of cumulative errors over the time horizon. \navid{Furthermore} in this setting, the size of the models $\overallf_q(\statee_0\ ; \theta_q), q \in [N]$ can be small. The small size of the models allows for efficient computation of surrogate flowpipes $\bar{X}_q := \overallf_q(\init\ ; \theta_q)$ for each segment via exact-star reachability analysis\footnote{However, if the set of initial states $\init$ is large and the partitioning of $\init$ is not scalable (high dimensional states), we remain limited to using approx star. Nevertheless, even for this case, the small size of the model significantly reduces the conservatism of approx star.}. 
\navid{Additionally, } the smaller models $\overallf_q(\statee_0\ ; \theta_q), q \in [N]$ enable efficient training of accurate models for each trajectory segment. \navidg{Furthermore, although we have to train more models using this technique,  we can train them in parallel as they are totally independent processes.}

\navid{Once the surrogate flowpipes \( \bar{X}_q = \langle \bar{c}_q, \bar{V}^q, \bar{P}_q \rangle \), for \( q \in [N] \), are obtained as star sets, the surrogate flowpipe for the entire trajectory forms another star set, \( \bar{X} = \langle \bar{c}, \bar{V}, \bar{P} \rangle \). This global surrogate flowpipe is constructed by concatenating all individual star sets \( \bar{X}_q \), for \( q \in [N] \), which implies: $\bar{c} = \left[ \bar{c}_1^\top , \ldots , \bar{c}_N^\top \right]^{\top}$, $\bar{V} = \mathbf{diag}\left(\bar{V}^1, \ldots, \bar{V}^N\right)$ and $\bar{P} = \bigwedge_{q=1}^N \bar{P}_q.$  }

\vspace{-2mm}
\subsection{Accurate Inflating Hypercubes  via Principal Component Analysis }
\label{sec:PCA}
\vspace{-2mm}
\begin{figure}
    \centering
    \includegraphics[width=0.45\linewidth]{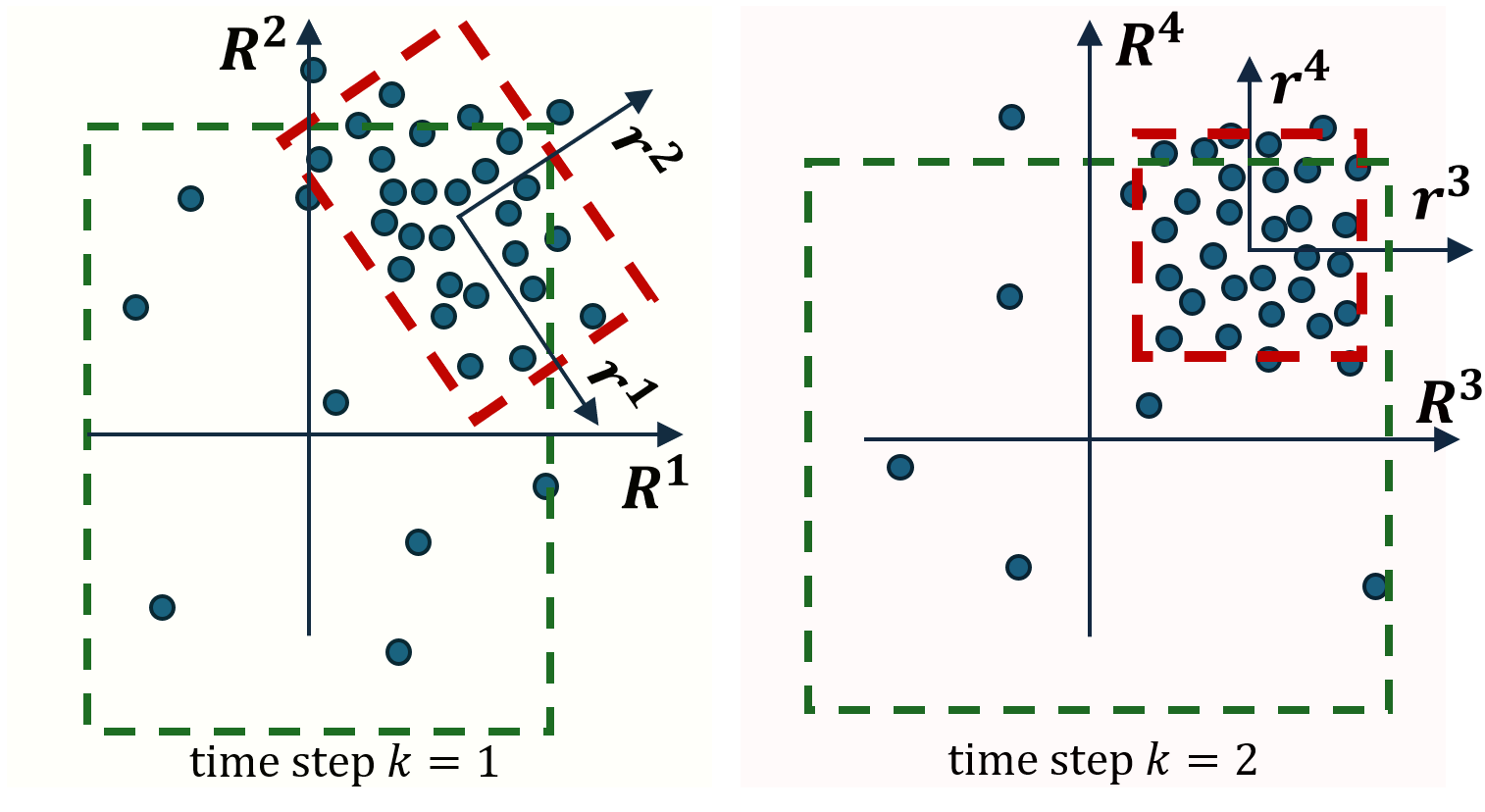}
    \vspace{-4mm}
    \caption{The figure shows the projection of prediction errors for two-dimensional states over a horizon of $\horizon = 2$. The left figure illustrates the projection on the $(R^1, R^2)$ axes (e.g., $k=1$), and the right figure displays the projection on the $(R^3, R^4)$ axes (e.g., $k=2$). This figure provides a comparison between the inflating hypercubes for a confidence level $\delta \in (0,1)$, generated by the PCA approach (red hypercubes) and the method proposed in \cite{hashemi2024statistical} (green hypercubes). It clearly demonstrates the superior accuracy of the PCA technique compared to the other method. The principal axes for $k=1,2$ are $(r^1, r^2)$ and $(r^3, r^4)$, respectively.}
    \label{fig:compy}
    \vspace{-4mm}
\end{figure}
\navidd{Principal Component Analysis (PCA) is a mathematical technique used to identify the principal directions of variation in a dataset. Given a dataset of $L$ data points, \( x_i \in \mathbb{R}^{n}, i\in[L]  \), PCA estimates the covariance matrix, $\Sigma \succeq 0, \Sigma \in \mathbb{R}^{n\times n}$ of data points $x_i, i\in[L]$. The eigenvectors of \( \Sigma \), known as \textbf{principal components}, define the directions along which the data exhibits the highest variance, while the corresponding eigenvalues quantify the magnitude of variance along each direction. These principal components form an orthonormal basis that aligns with the natural structure of the data, providing key insights into its intrinsic geometric properties.}

The authors in \cite{hashemi2024statistical} use the function in \eqref{eq:Rmax} to define the residual $\rho$ and compute the corresponding inflating hypercube $\delta X$. 
\navid{However, their technique imposes two conservative constraints. First, the center of the hypercube is always located at the origin. Second,
the edges of the hypercube are restricted to be aligned with the direction of the trajectory state components.}

To address these limitations, we propose \navid{an adjustment on the definition of the residual $\rho$ in equation \eqref{eq:Rmax},} that enables us to overcome these issues. We integrate the concepts of Conformal Inference (CI) and Principal Component Analysis (PCA) in our new definition for the residual. This approach provides the principal axes as the orientation of inflating hypercube and noticeably reduces its size. In other words, our approach enhances the accuracy of conformal inference by manipulating the coordinate system, inspired by PCA. However, in the context of CI, altering the coordinate system has also been addressed in other works, such as \cite{tumu2024multi,sharma2024pac}.

To obtain the principal axes, given the simulated trajectories from the training dataset $\trajsim_{\statee_{0,i}} \in \traindataset, i\in [|\traindataset|]$, for each segment $q\in[N]$, we use the trajectory segment $\trajsimseg{q}_{\statee_{0,i}}$ and its corresponding surrogate model $\overallf_q(\statee_{0,i}; \theta_q)$ to compute the corresponding set of prediction errors. Specifically, for each segment $q\in [N]$ and data index $i\in[|\traindataset|]$, we collect: 
\begin{equation}
\PEsimseg{q}_i = \left[ \errsim{t_q n+1}_i, \errsim{t_q n+2}_i, \ldots, \errsim{(t_q+T_q)n}_i \right]
\end{equation}
and approximate the average and covariance as follows:
\begin{equation}
\navid{\overline{\PE}^{q}} = \frac{\sum_{i=1}^{|\traindataset|} \PEsimseg{q}_i}{|\traindataset|} , \ \Sigma^q = \frac{\sum_{i=1}^{|\traindataset|} \left(\PEsimseg{q}_i \!\!- \overline{\PE}^{q}\right)^\top \left(\PEsimseg{q} \!\!- \overline{\PE}^{q}\right)}{|\traindataset|}.    
\end{equation}
We then apply spectral decomposition on the covariance matrix $\Sigma^q$ to obtain the array of eigenvectors $\eigvecseg{q} \in \mathbb{R}^{T_q n \times T_q n}$. Here, the principal axes for the trajectory segment $q \in [N]$ are centered on $\overline{\PE}^{q}$, and are aligned with the eigenvectors $\eigvecseg{q}_\ell, \ell \in [T_q n]$, which are the $\ell$-th columns of the matrix $\eigvecseg{q}$.

Given the initial state, $\statee_0 \sim \mathcal{W}$, assume a trajectory $\statee_1, \ldots, \statee_\horizon$ that is not necessarily sampled from $\distzero$ and also is not necessarily a member of the training dataset. For any segment $q \in [N]$ of this trajectory, we map its vector of prediction errors $\PE^q = \left[ R^{t_q n+1}, R^{t_q n+2}, \ldots, R^{(t_q+T_q)n} \right]$ to the principal axes. We do this with a linear map, as, 
\begin{equation}\label{eq:PCAerror}
\left[ r^{t_q n+1},r^{t_q n+2}, \ldots, r^{(t_q+T_q)n} \right] = \transpose{\eigvecseg{q}} (\PE^q - \overline{\PE}^{q}),
\end{equation} 
and utilize the parameters $r^j,\ t_q n+1\leq j \leq (t_q+T_q)n$ to define the residual. Collecting the mapped prediction errors for all segments $q\in[N]$, we propose our definition for residual as follows:
\begin{equation}\label{eq:PCAres}
    \rho:=  \max \left(\  \frac{|r^1|}{\omega_1} ,\  \frac{|r^2|}{\omega_2} ,\  \ldots \ , \  \frac{|r^{n\horizon}|}{\omega_{n\horizon}}\ \right)
\end{equation}
where the scaling factors $\omega_j , j \in [n\horizon]$ are the maximum magnitude of parameters $r^j_i , i\in |\traindataset|$, that are obtained from the training dataset. In other words,
\begin{equation}
\omega_j = \max( |r^{j}_1|,\  |r^{j}_2|,\  \ldots,\  |r^{j}_{|\traindataset|}|  ),\ \  j \in[n\horizon]. 
\end{equation}

Although we use the training dataset $\traindataset$ to determine hyperparameters, $\eigvecseg{q}$, $\overline{\PE}^q$, and $\omega_j$ for defining the residual, reusing $\traindataset$ to generate the inflating hypercube with robust conformal inference violates CI rules. Thus, in order to generate the inflating hypercube, we first sample a new i.i.d. set of trajectories from the training environment $\distzero$, which we denote as the calibration dataset.
\begin{definition}[Calibration Dataset] The calibration dataset $\calibdataset$ is defined as:
\begin{equation}
\calibdataset \!=\! 
\left\{ 
    \left(\statee_{0,i}, \rho_i \right) \middle|
    \begin{array}{l}
        \statee_{0,i}\sim \mathcal{W}, \, \trajsim_{\statee_{0,i}} \sim \distzero,  \\
        \rho_i = \max( \frac{|r^1_i|}{\omega_1}, \ldots, \frac{|r^{n\horizon}_i|}{\omega_{n\horizon}})
    \end{array}
\right\}.
\end{equation}
\noindent Here, $\trajsim_{\statee_{0,i}}, i \in |\calibdataset|$ refers to the trajectory starting at the $i^{th}$ initial state sampled from $\mathcal{W}$, generated from $\distzero$. The parameters $r^j_i$ are also as defined in equation~\eqref{eq:PCAerror}.
\end{definition}

Consider sorting the i.i.d. residuals $\rho_i \sim \distzeroR$ collected in the calibration dataset $\calibdataset$ by their magnitude: $\rho_1 < \rho_2 < \ldots < \rho_{|\calibdataset|}$. Our goal is to provide a provable upper bound for the $\delta$-quantile of a residual $\rho \sim \distR$, given knowledge of a radius $\tau > 0$ such that the total variation $\tv(\distR, \distzeroR) < \tau$. In this case, robust conformal inference \cite{cauchois2020robust} suggests using the rank $\ell^*$ from equation \eqref{eq:robustrank} and selecting $\rho^*_{\delta, \tau} := \rho_{\ell^*}$ as an upper bound for the residual's $\delta$-quantile. In other words, for a residual $\rho \sim \distR$, we have $\Pr[\rho < \rho^*_{\delta, \tau}] > \delta$.

\begin{proposition}\label{prop:maxcontribution}
Assume $\rho^*_{\delta,\tau}$ is the $\delta$-quantile of $\rho \sim \distR$, computed over the residuals $\rho_i \sim \distzeroR$ from the calibration dataset $\calibdataset$ where $\tv(\distR, \distzeroR)<\tau$. For the residual $\rho=  \max \left(\  \frac{|r^1|}{\omega_1} ,\  \frac{|r^2|}{\omega_2} ,\  \ldots \ , \  \frac{|r^{n\horizon}|}{\omega_{n\horizon}}\ \right)$ sampled from the distribution $\distR$, and the trajectory division setting, $T_q, q\in[N]$, it holds that, $\Pr\left[ P(r^1,\ldots,r^{n\horizon}) = \top \right] > \delta$, where,
\vspace{-3mm}
\begin{equation}\label{eq:probguarantee}
P(r^1,\ldots,r^{n\horizon}) \!\! = \!\!\! \bigwedge_{q=1}^{N} \!\! P_q(r^{t_qn+1}\!\!\!\!\!\!\!\!\!\!\!,\ldots,r^{(t_q+T_q)n}),\ \ P_q(r^{t_qn+1}\!\!\!\!\!\!\!\!\!,\ldots,r^{(t_q+T_q)n}) \!\! :=\!\!\!\!\!\!\!\!\bigwedge_{j=t_qn+1}^{(t_q+T_q)n} \!\!\!\!\!\!\!\left(-\omega_j\rho^*_{\delta,\tau}\leq r^j \leq \omega_j\rho^*_{\delta,\tau} \right) ,
\end{equation}
\vspace{-4mm}
and $r^j$ is the mapped version of prediction errors $R^j,\ j\in [n\horizon]$ on principal axes.
\end{proposition}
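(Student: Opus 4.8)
\noindent The claim is essentially a deterministic reformulation of the robust conformal inference guarantee, so the plan is to (i) invoke the bound $\Pr[\rho < \rho^*_{\delta,\tau}] > \delta$ already established for the residual \eqref{eq:PCAres} in the paragraph preceding the statement, (ii) show that the scalar event $\{\rho < \rho^*_{\delta,\tau}\}$ is contained in the event $\{P(r^1,\dots,r^{n\horizon}) = \top\}$, and (iii) conclude by monotonicity of probability. Steps (ii)--(iii) carry essentially no analytic content; the only thing to be careful about is the bookkeeping between the flat index set $[n\horizon]$ and the per-segment decomposition that appears in \eqref{eq:probguarantee}.

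For step (ii), I would simply unfold the definition of the residual in \eqref{eq:PCAres}. Since every scaling factor $\omega_j$ is positive (as implicitly required by the divisions in \eqref{eq:PCAres}), the event $\{\rho < \rho^*_{\delta,\tau}\}$ equals $\bigcap_{j=1}^{n\horizon}\{\, |r^j|/\omega_j < \rho^*_{\delta,\tau}\,\} = \bigcap_{j=1}^{n\horizon}\{\, -\omega_j\rho^*_{\delta,\tau} < r^j < \omega_j\rho^*_{\delta,\tau}\,\}$, which is contained in the closed box $\bigcap_{j=1}^{n\horizon}\{\, -\omega_j\rho^*_{\delta,\tau} \le r^j \le \omega_j\rho^*_{\delta,\tau}\,\}$. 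Because $t_q = \sum_{\ell=1}^{q-1}T_\ell$ with $t_1 = 0$ and $\sum_{q=1}^{N} T_q = \horizon$, the index blocks $\{t_qn+1,\dots,(t_q+T_q)n\}$, $q\in[N]$, are pairwise disjoint and their union is $\{1,\dots,n\horizon\}$; hence the flat conjunction over $j\in[n\horizon]$ is literally the nested conjunction $\bigwedge_{q=1}^N P_q = P(r^1,\dots,r^{n\horizon})$ of \eqref{eq:probguarantee}. Combining with step (i) via $\Pr[P = \top] \ge \Pr[\rho < \rho^*_{\delta,\tau}] > \delta$ then finishes the argument.

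The one place that deserves genuine care --- and the closest thing to an obstacle --- is justifying step (i) itself, i.e.\ that robust conformal inference \cite{cauchois2020robust} legitimately applies to the residual \eqref{eq:PCAres}. The map $\statee_0 \mapsto \rho$ defined through \eqref{eq:PCAerror}--\eqref{eq:PCAres} depends on the hyperparameters $\eigvecseg{q}$, $\overline{\PE}^q$ and $\omega_j$, which are estimated from the training dataset $\traindataset$; since $\calibdataset$ is drawn independently of $\traindataset$, conditioning on $\traindataset$ turns this map into a fixed measurable function, so the calibration residuals $\rho_i$ are i.i.d.\ samples from $\distzeroR$ and a fresh deployment residual $\rho\sim\distR$ still satisfies $\tv(\distR,\distzeroR) < \tau$ by hypothesis. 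The robust CI rank bound \eqref{eq:robustrank} then gives $\Pr[\rho < \rho_{\ell^*}] = \Pr[\rho < \rho^*_{\delta,\tau}] > \delta$, and the conclusion above holds conditionally on $\traindataset$, hence unconditionally. I would also point out explicitly that the strict inequality in the $\max$ only implies the non-strict box predicate $P$, which is all that the statement requires, so the probability inequality stays strict throughout.
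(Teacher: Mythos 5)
Your proof is correct and follows essentially the same route as the paper's: unfold the $\max$ in \eqref{eq:PCAres} to see that $\{\rho < \rho^*_{\delta,\tau}\}$ coincides with the componentwise box event, then transfer the robust conformal bound $\Pr[\rho < \rho^*_{\delta,\tau}] > \delta$ by monotonicity and regroup the indices by segment. Your added care about conditioning on $\traindataset$ to make the residual map a fixed function (so that the calibration residuals are genuinely i.i.d.) is a point the paper only addresses informally in the surrounding text rather than inside the proof, but it does not change the argument.
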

\begin{proof}
The proof follows as the residual $\rho$ is the maximum of the normalized version of parameters $r^j, j\in n\horizon$ so that
\vspace{-3mm}
\begin{equation}
\rho=  \max \left(\  \frac{|r^1|}{\omega_1} ,\  \frac{|r^2|}{\omega_2} ,\  \ldots \ , \  \frac{|r^{n\horizon}|}{\omega_{n\horizon}}\ \right)\!\! \iff \!\! \bigwedge_{j=1}^{n\horizon} \left[ |r^j| \leq \rho \omega_j \right].
\end{equation}
\vspace{-3mm}\\
Now, since $\Pr[ \rho \leq \rho^{*}_{\delta,\tau} ] \geq \delta$ as well as $\rho<\rho^*_{\delta,\tau} \iff |r^j|< \rho^*_{\delta,\tau} \omega_j$ for all $j\in [n\horizon]$, we can claim that  $\Pr[ \bigwedge_{j=1}^{n\horizon} [ |r^j| \leq \rho^*_{\delta,\tau} \omega_j ]] \geq \delta$. The guarantee proposed in \eqref{eq:probguarantee} is the reformulation of this results in terms of the division setting, $T_q, q\in [N]$.    
\end{proof}
Referring to Def.~\ref{def:star}, and using the predicates $P_q(r^{t_qn+1}\!\!\!\!\!\!\!\!\!,\ldots,r^{(t_q+T_q)n}), q\in[N]$ from Proposition \ref{prop:maxcontribution} we can introduce the inflating hypercubes, $\delta X_q , q\in [N]$ as star sets. In other words, from equation \eqref{eq:PCAerror}, for any $q\in[N]$ we can compute the prediction errors as,
\begin{equation}
\PE^q = \overline{\PE}^q + \eigvecseg{q}\left[ r^{t_q n+1},r^{t_q n+2}, \ldots, r^{(t_q+T_q)n} \right]
\end{equation}
which implies $\delta X_q = \langle \overline{\PE}^q, \eigvecseg{q}, P_q(r^{t_qn+1}\!\!\!\!\!\!\!\!\!,\ldots,r^{(t_q+T_q)n}) \rangle$, and thus the concatenation of the star sets $\delta X_q$, serves as an inflating hypercube for $\PE$. \navid{Therefore, we denote the inflating hypercube of the entire trajectory by $\delta X = \langle \overline{\PE}, V, P \rangle$ where $V = \mathbf{diag}\left(V^1, \ldots, V^N\right)$ and $P = \bigwedge_{q=1}^N P_q$.} 

\navid{Finally, based on Lemma \ref{lem:inclusion_conformal}, the $\delta$-confident flowpipe on the entire \navidg{trajectory} $X$ can be obtained through the inflation of surrogate flowpipe $\bar{X}$ with the inflating hypercube $\delta X$, i.e., $X = \bar{X}\oplus \delta X$ }.


\vspace{-3mm}
\begin{remark}
Figure \ref{fig:compy} shows the advantage of the PCA approach by illustrating prediction errors of a $2$-dimensional state over $2$ consecutive time steps\footnote{Division setting: $\horizon =2,\  n=2,\  N=2$, and $T_1=T_2= 1$.}. This figure also provides a schematic of the inflating hypercubes generated by our residual definition and those generated by the definition proposed in \eqref{eq:Rmax}. The primary function of the vectors $\overline{\PE}^q, q\in[N]$ is to reposition the surrogate reachsets $\bar{X}_q$ to locations that require minimal inflation, and the main role of $\eigvecseg{q}$ is to further reduce the necessary level of inflation.
\end{remark}

\vspace{-6mm}
\section{Numerical Evaluation}
\label{sec:Experiments}

To simulate real-world systems capable of producing actual trajectory data, we employ stochastic difference equation-based models with additive Gaussian noise to account for uncertainties in observations, dynamics, and potential modeling errors. Our theoretical guarantees apply to any real-world distribution $\trajreal_{\statee_0} \in \dist$, provided that the residual distribution shift $\tv(\distzeroR, \distR)$ is below a given threshold $\tau$. Here we evaluate our results on three different case studies. The first two experiments involve a $12$-D quadcopter with $\tau = 0$, while the final experiment focuses on a $27$-D powertrain model where the distribution shift is upper-bounded at $\tau = 4\%$. In Experiment 1, we compare our approach with \cite{hashemi2024statistical}. However, since that methodology does not scale well for trajectories with large number of time-steps, $\horizon$, we address Experiments 2 and 3, only using the methodology proposed in this paper. The next two sections provide a general overview of the experiments, with detailed information deferred to the Appendix. Table \ref{tbl:expdetails} also presents the details of the numerical results.

\begin{table*}
\hspace{1mm}
\resizebox{0.85\textwidth}{!}{
\begin{tabular*}{\textwidth}{@{\extracolsep{\fill}}cccccccccc}
\cmidrule{1-10}
\muc{1}{} & \muc{2}{Specification} & \muc{3}{Training} & \muc{2}{Surrogate Reachability} & \muc{2}{Inflating Hypercube}\\
\cmidrule{2-3}\cmidrule{4-6}\cmidrule{7-8} \cmidrule{9-10}
    Exp $\#$: & $\delta$ & $\tau$  & $\#$ & avg runtime  & $\mid \traindataset \mid$  &  $\#$ & avg runtime(method)  & runtime & $\mid \calibdataset \mid$  \\
\cmidrule{1-10}
1    & $99.99\%$   & $0$    & $100$ & $39.6\ \sec$  &  $42,000$ & $100$  & $1.43\ \sec\ $ (E)   & $2.08\ \sec$ & $20,000$\\
2    & $99.99\%$   & $0$    & $451$ & $33.65\ \sec$  &  $20,000$ & $4501$ & $0.030\ \sec\ $ (E)  & $116.58\ \sec$ & $20,000$\\
3    & $95\%$      & $4\%$  & $400$ & $40.6\ \sec$  &  $10,000$ & $4000$ & $0.064\ \sec\ $ (A)  & $142.02\ \sec$ & $10,000$\\
\cmidrule{1-10}
\end{tabular*}
}
\vspace{-3mm}\\
\caption{Shows details of the experiments. The models are trained in parallel with $18$ CPU workers. Thus, the average training runtime may vary by selecting different number of workers. The words E, and A represent exact-star and approx-star, respectively.  }
\vspace{-6mm}
\label{tbl:expdetails}
\end{table*}
\vspace{-2mm}
\subsection{$12$-Dimensional Quadcopter}
\vspace{-2mm}
We consider the $12$-dimensional quadcopter system under stochastic conditions for two different case studies. Trajectories are simulated using two ODE models from \cite{hashemi2024statistical} and \cite{hashemi2024scaling} as our simulators. The state variables include the quadcopter's position $(x_1, x_2, x_3)$, velocity $(x_4, x_5, x_6)$, Euler angles $(x_7, x_8, x_9)$ representing roll, pitch, and yaw angles, and angular velocities $(x_{10}, x_{11}, x_{12})$. We also include zero mean additive Gaussian process noise $v \sim \gaussian(0_{12\times1}, \Sigma_v)$ to the simulators with covariance $\Sigma_v = \mathbf{diag}\left( [0.05 \times \vec{1}_{1\times6} ,\  0.01\times \vec{1}_{1\times6]}]^2 \right)$.  In both examples, the set of initial states $\statee_0 \in \init$ is taken from the cited papers, with the distribution $\statee_0 \sim \mathcal{W}$ being uniform.
\vspace{-2mm}
\subsection{27-Dimensional Powertrain }
\vspace{-2mm}
We use the powertrain system proposed by \cite{althoff2012avoiding} as our simulator, which is a hybrid system with three modes. To introduce stochastic conditions, we add zero-mean Gaussian process noise, $v \sim \gaussian(\vec{0}_{27\times1}, \Sigma_v)$, where $\Sigma_v = \mathbf{diag}\left( 10^{-5}\times \vec{1}_{1\times27} \right)$, to their simulator, defining the distribution $\trajsim_{\statee_0}\sim \distzero$.
This system is highly sensitive to noise, which is a key reason we addressed it in this paper. For example, Figure \ref{fig:noisecontribution} shows the angular velocity of the last rotating mass, $x_{27}$, both with and without noise. Following \cite{althoff2012avoiding}, we simulate trajectories with a sampling time of $\delta t = 0.0005$ over a horizon of $2$ seconds ($\horizon = 4000$), and consider their set of initial states $\init$ \footnote{In this case the set $\init$ proposed in \cite{althoff2012avoiding} is a large and high dimensional set, thus the exact star does not scale, and we are restricted to utilized approx star for surrogate reachability.}. We also define the trajectory division setting as $N=4000$, $T_q=1, q\in[N]$. The $\relu$ NN models are with structure $[27, 54,27]$. To reduce the training runtime, we again follow the analytical interpolation strategy we introduced for Experiment 2.

\section{Acknowledgements}
This work was partially supported by the National Science Foundation through the following grants: CAREER award (SHF-2048094), CNS-1932620, CNS-2039087, FMitF-1837131, CCF-SHF-1932620, IIS-SLES-2417075, funding by Toyota R\&D and Siemens Corporate Research through the USC Center for Autonomy and AI, an Amazon Faculty Research Award, and the Airbus Institute for Engineering Research. This work does not reflect the views or positions of any organization listed.

\section{Conclusion}
We introduced a scalable technique for reachability in real-world settings. Our results demonstrate that integrating PCA with Conformal inference significantly enhances the accuracy of error analysis. We validated the effectiveness of our approach across three distinct high-dimensional environments.



\appendix

\begin{figure*}
\centering
\includegraphics[trim={3.6cm
  1cm 3.7cm 1cm},width =0.85\linewidth]{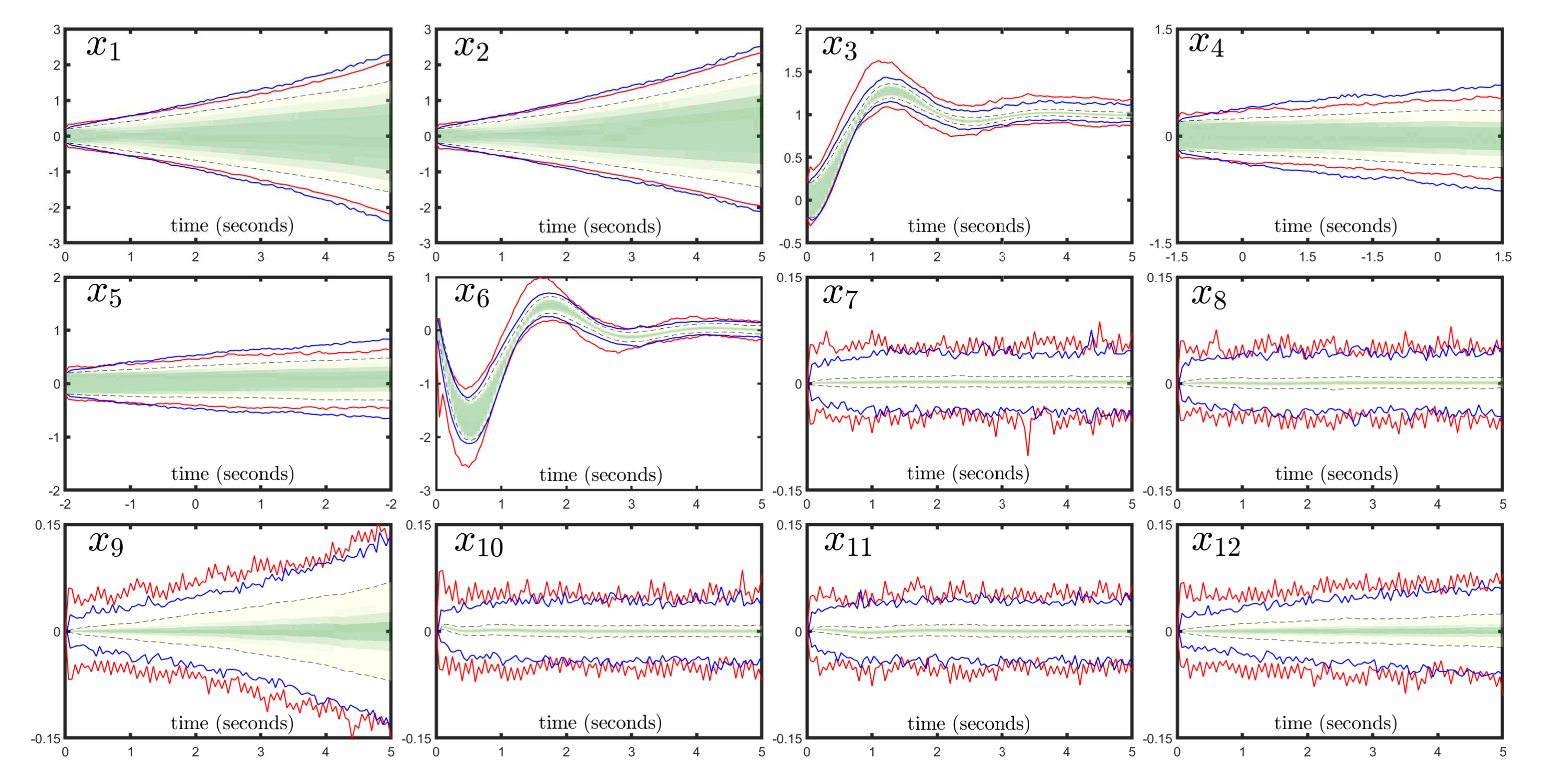}
\caption{Shows the comparison with \cite{hashemi2024statistical}. The blue and red borders are projections of our and their $\delta$-confident flowpipes respectively with $\delta = 99.99\%$. The shaded regions show the density of the trajectories from $\traindataset$.}
\label{fig:compEmsoft}
\end{figure*}
\begin{figure*}
\centering
\includegraphics[trim={3.6cm
  1cm 3.4cm 1cm},width =0.85\linewidth]{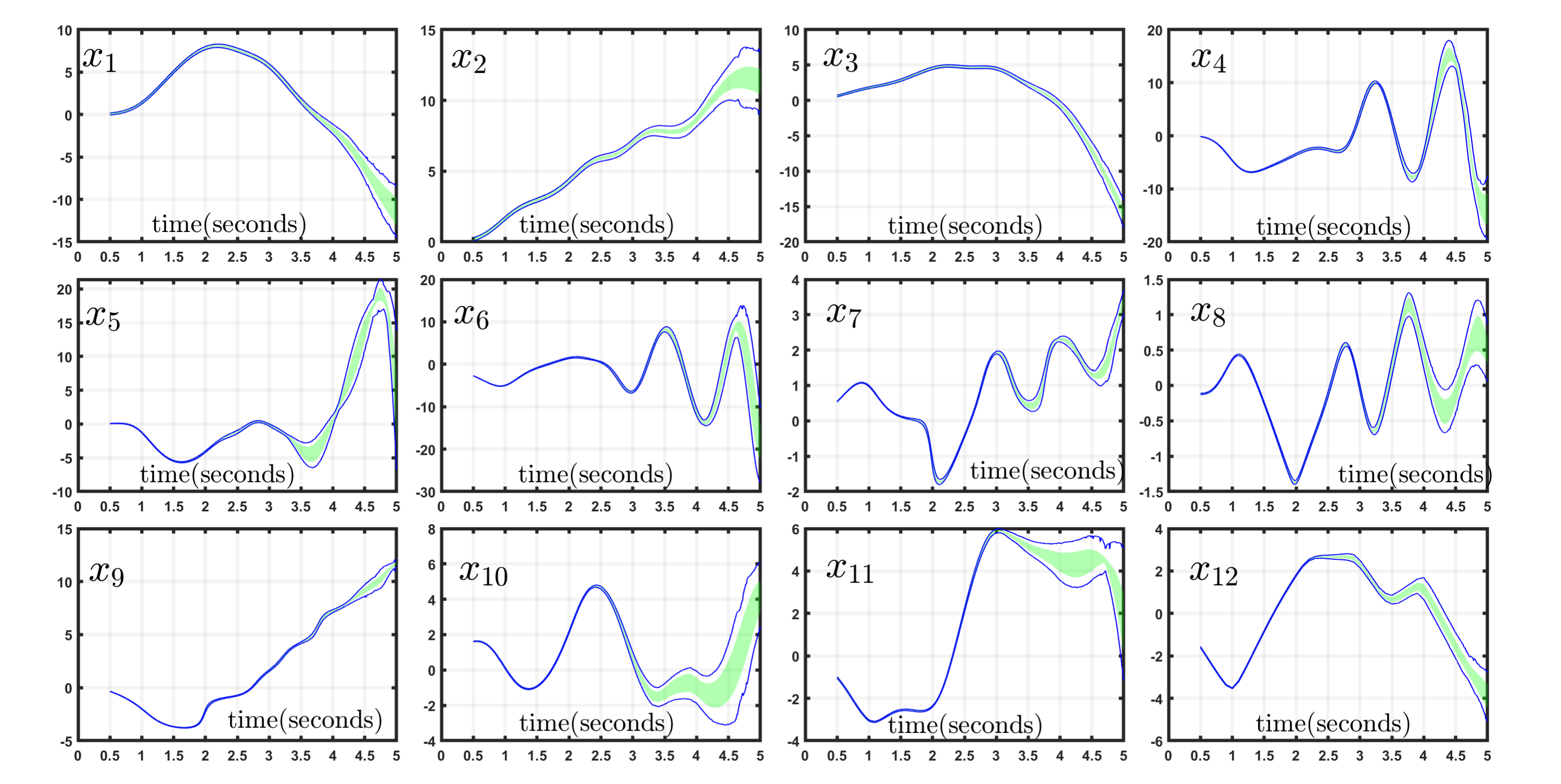}
\caption{ Shows the projection of our $\delta$-confident flowpipe on each component of the trajectory state. The shaded area are the simulation of trajectories from $\traindataset$.}
\label{fig:nested}
\vspace{-5mm}
\end{figure*}

\begin{figure*}
    \begin{minipage}[t]{0.76\textwidth}
        \centering
        \includegraphics[trim={3.2cm 0.5cm 3.4cm 2cm},width = 0.95\textwidth]{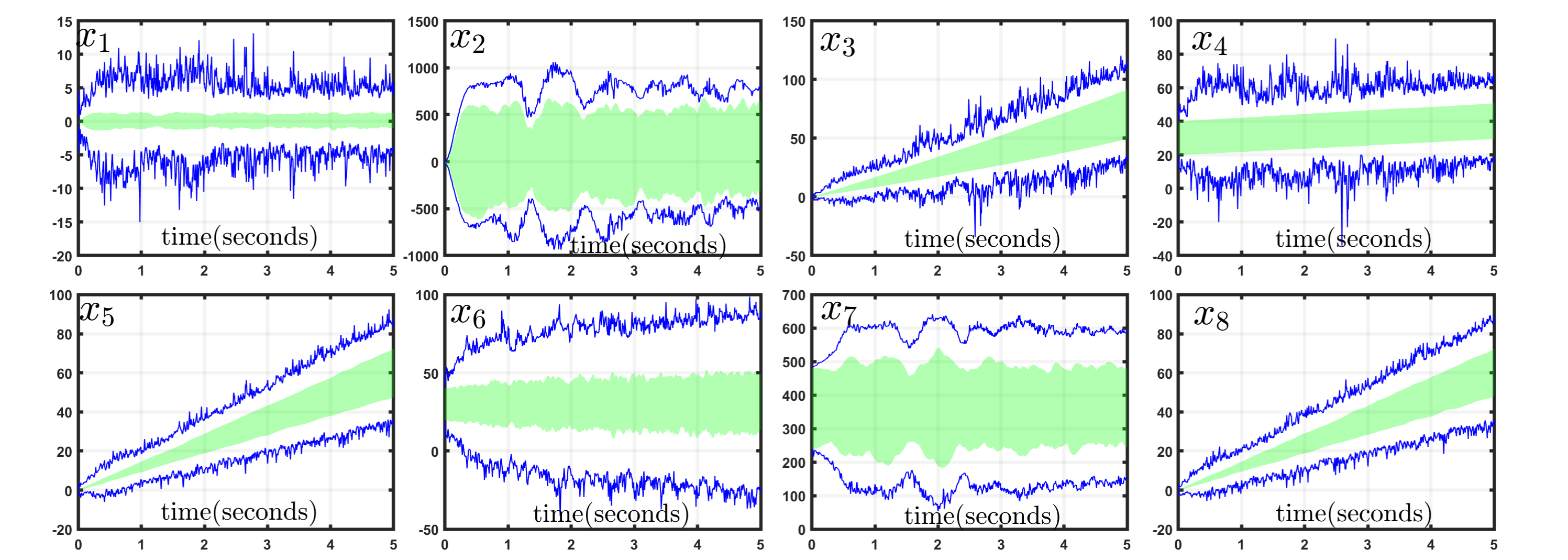}
        \caption{Shows the projection of our $\delta$-confident flowpipe on the first $8$ components of the trajectory state. There is a shift between the distribution of deployment and training environments. The shaded area are the trajectories sampled from the deployment environment.}\label{fig:proj}
    \end{minipage}
    \hfill
    \begin{minipage}[t]{0.21\textwidth}
        \vspace{-4.5cm}
        \includegraphics[width=\textwidth]{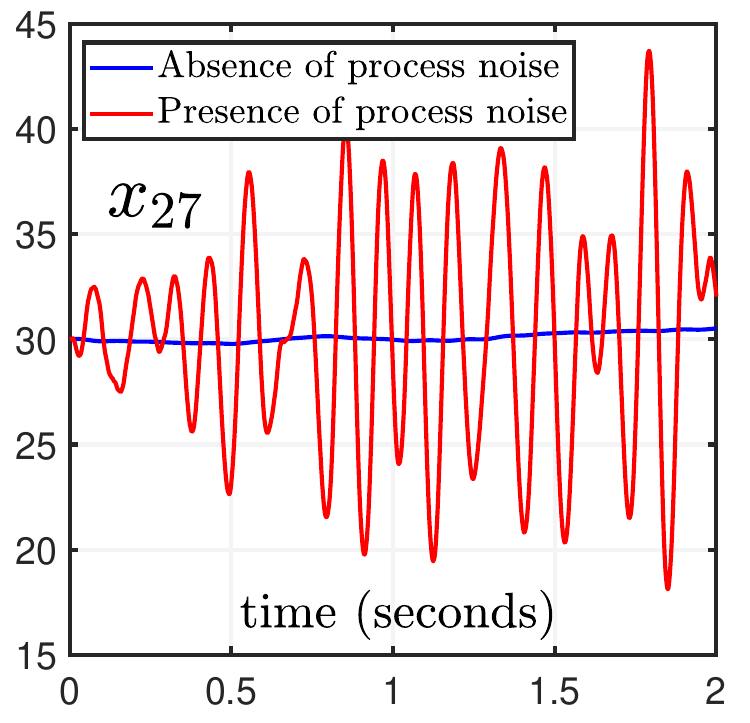}
        \caption{Shows the comparison of angular velocity of the last rotating mass in presence and absence of the process noise.}
        \label{fig:noisecontribution}
    \end{minipage}
    \vspace{-5mm}
\end{figure*}
\section{Detail of the Experiments}
\subsection{Experiment 1:[Comparison with \cite{hashemi2024statistical}]} Here we address Experiment 2 from \cite{hashemi2024statistical} for comparison of the results. In this experiment, a quadcopter hovers at a specific elevation, and its trajectories are simulated over a horizon of $\horizon = 100$ time steps, with a sampling time of $\delta t = 0.05$. The $\delta$-confident flowpipe has a confidence level of $\delta = 99.99\%$. Compared to \cite{hashemi2024statistical}, our approach achieves a higher level of accuracy. This improvement is due to our training strategy, which allows us to use exact-star for surrogate reachability, and our PCA-based technique, which results in smaller inflating hypercubes. In this experiment, we use a trajectory division setting of $N=100$, $T_q=1$, for $q\in[N]$, with $\relu$ neural network surrogate models structured as $[12, 24, 12]$. Figure \ref{fig:compEmsoft} shows the projection of the flowpipe on each state in comparison with the results of \cite{hashemi2024statistical}, and Table \ref{tbl:expdetails} shows the detail of the experiment.

\subsection{Experiment 2: [Sequential Goal Reaching Task]} In this example, we consider the quadcopter scenario described in \cite{hashemi2024scaling}, where a controller is designed to ensure that the machine accomplishes a sequential goal-reaching task. We also include the previously mentioned process noise in the simulator to include stochasticity. Given the quadcopter's tendency for unpredictable behavior, we significantly reduce the sampling time in this instance. The trajectories are sampled at a frequency of 1 KHz over a 5-second horizon, resulting in 5000 time steps. Our objective is to perform reachability analysis for time steps 500 through 5000 with the level of confidence $\delta = 99.99\%$. We propose a trajectory division setting of $N=5000$ with $T_q = 1$ for $q \in [N]$. To reduce the runtime for model training, we employ analytical interpolation. Specifically, we select every tenth time step for model training, and for $i \in 50, 51, \ldots, 500$, and $j \in [10]$, we regenerate all the $\relu$ neural network surrogate models using the following formula:
\begin{equation}\label{eq:strategy}
\overallf_{10i+j} = (1-0.1j)\overallf_{10i} + 0.1j\overallf_{10(i+1)} 
\end{equation}
where the models $\overallf_{10i}$ have a structure of $[12, 24, 12]$. We then utilize these regenerated $\relu$ neural network surrogate models for surrogate reachability through exact star reachability analysis, as well as error analysis using PCA-based conformal inference. Figure \ref{fig:nested} shows the resulting flowpipe and Table \ref{tbl:expdetails} shows the detail of the experiment.

\subsection{Experiment 3: [Reachability with Distribution shift]} Let's assume the real world trajectories $\trajreal_{\statee_0} \sim \dist$ are such that its covariance of process noise is $20\%$ larger than $\Sigma_v$. In this case, the threshold $\tau = 0.04$ is a valid upper-bound for $\tv(\distzeroR, \distR)$. In this experiment, given the threshold, $\tau$ we generate a $\delta$-confident flowpipe for $\trajreal_{\statee_0}$ with $\delta = 95\%$. Figure \ref{fig:proj} shows the projection of our computed flowpipe on the first $8$ components of states, and Table \ref{tbl:expdetails} shows the detail of the experiment.

\end{document}